\newtheorem{theorem}{Theorem}[section]
\newtheorem{lemma}[theorem]{Lemma}
\newtheorem{definition}[theorem]{Definition}
\newtheorem{problem}[theorem]{Problem}
\newtheorem{claim}[theorem]{Claim}
\newtheorem{remark}[theorem]{Remark}
\newtheorem{observation}[theorem]{Observation}
\title{\LARGE \textbf{Energy-Aware Route Planning for a Battery-Constrained Robot \\with Multiple Charging Depots}}
\author{Ahmad Bilal Asghar, Pratap Tokekar
}
\begin{document}
\maketitle
\begin{abstract} 
This paper considers energy-aware route planning for a battery-constrained robot operating in environments with multiple recharging depots. The robot has a battery discharge time $D$, and it should visit the recharging depots at most every $D$ time units to not run out of charge. The objective is to minimize robot's travel time while ensuring it visits all task locations in the environment. We present a $O(\log D)$ approximation algorithm for this problem. We also present heuristic improvements to the approximation algorithm and assess its performance on instances from TSPLIB, comparing it to an optimal solution obtained through Integer Linear Programming (ILP). The simulation results demonstrate that, despite a more than $20$-fold reduction in runtime, the proposed algorithm provides solutions that are, on average, within $31\%$ of the ILP solution.
\end{abstract}
\section{Introduction}
In recent years, battery-powered robots have seen significant development and deployment in a wide range of fields, including but not limited to autonomous drones and robots engaged in surveillance missions~\cite{basilico2012patrolling,asghar2016stochastic}, autonomous delivery systems~\cite{jennings2019study}, agriculture~\cite{wei2018coverage}, forest fire monitoring~\cite{merino2012unmanned}, and search and rescue operations~\cite{hayat2017multi}. However, their operational efficiency and effectiveness are fundamentally limited by the finite capacity of their onboard batteries. As a result, efficient energy management and route planning have become paramount, especially when robots are tasked with extended missions spanning large environments. This paper addresses the critical challenge of energy-aware route planning for battery-constrained robots operating in real-world scenarios, leveraging the presence of multiple recharging depots present within the environment. Our approach seeks to minimize the length of robot's route, while ensuring mission completion, i.e., the robot not running out of charge in a region where recharging depots are not nearby.

We approach this problem as a variant of the well-known Traveling Salesperson Problem (TSP), wherein the robot's objective is to visit all locations within the environment. These locations represent various tasks that the robot must accomplish.  Multiple recharging depots are present in the environment, and the robot has a limited battery capacity, potentially requiring multiple recharging sessions. This adds complexity beyond determining the optimal task visitation order, as it involves strategic depot visits to minimize recharging and travel time. Additionally, the problem's feasibility is not trivial, as different visitation sequences may risk the robot running out of power, far from any recharging depot. In real-world scenarios, avoiding unnecessary and excessive recharging may also be critical for the battery life, therefore we also address the related problem of minimizing the number of required recharging events. 

The challenge of planning optimal paths for mobile robots holds significance across various domains. As depicted in Figure~\ref{fig:frontpage}, precision agriculture provides a prime example. Here, mobile robots are deployed for extensive field inspections, often spanning extended timeframes. Due to limited battery capacities, these robots may necessitate multiple recharges throughout their missions. The presence of multiple strategically positioned recharging stations enhances operational flexibility, enabling the coverage of larger agricultural environments. Consequently, the pursuit of efficient routes that maximize the utility of these multiple recharging stations becomes increasingly valuable.

Beyond agriculture, this path planning challenge extends to various applications. Consider, for instance, the domain of warehouse logistics. Autonomous robots navigate extensive warehouses to fulfill orders efficiently. In this context, mapping routes that optimize the use of available charging docks can lead to considerable improvements in order fulfillment and operational efficiency.

\begin{figure}[t]
    \centering
\includegraphics[angle=0,width=0.410\textwidth]{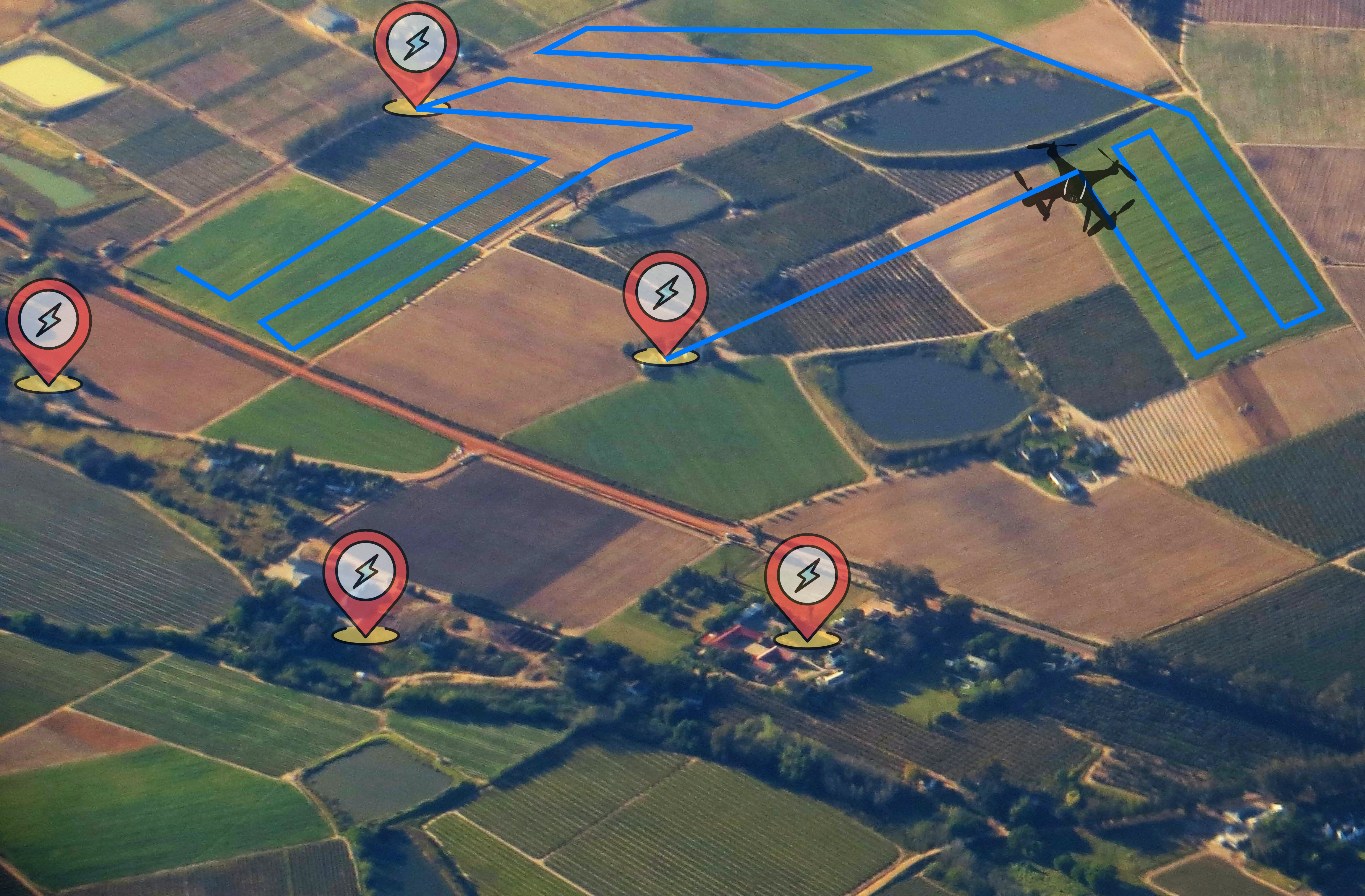}
    \caption{Application Example: An agricultural drone performs field inspections with multiple available recharging stations, showcasing the need for efficient routes with multiple recharges to cover a wide range of tasks.}
    \label{fig:frontpage}
\end{figure}

\emph{Contributions:} Our contributions in this paper are as follows:
\begin{itemize}
    \item We formally define the Multi-depot recharging TSP and the Minimum recharge cover problems.
    \item We present a $O(\log D)$ approximation algorithm to solve the problems where $D$ is the battery discharge time of the robot.
    \item We propose heuristic improvements to the approximation algorithm.
    \item We compare the performance of the algorithm with an Integer Linear Program on instances from TSPLIB library. 
\end{itemize}

\emph{Related Work:} The problem of finding paths for charge-constrained robots has been extensively studied in the literature. A closely related work, discussed in~\cite{khuller2011fill}, provides an $O\left(\frac{1+\alpha}{1-\alpha}\right)$ approximation algorithm for the problem. In this algorithm, each vertex must be within $\alpha D/2$ distance from a recharging depot, where $D$ represents the battery discharge time. As vertices move farther from the recharging depots, the value of $\alpha$ approaches one, leading to an increased approximation ratio. In contrast, our paper introduces an approximation ratio of $O(\log D)$, which remains consistent regardless of the environment but depends on the robot's battery capacity. It's important to note that these approximation ratios are not directly comparable and their relative performance depends on the specific characteristics of each problem instance. In~\cite{sundar2013algorithms} the $O\left(\frac{1+\alpha}{1-\alpha}\right)$ approximation ratio is extended to include directed graphs. 

A rooted minimum cycle cover problem is studied in~\cite{nagarajan2012approximation} that can be used to visit all the locations in the environment in minimum number of visits to one recharging depot. The authors propose a $O(\log D)$ approximation algorithm for this problem. We extend this algorithm to include multiple recharging depots and instead of cycles, we find a union of feasible cycles and paths. The $O(\log D)$ approximation ratio for rooted minimum cycle cover problem is improved to $O(\frac{\log(D)}{\log \log (D))}$ in~\cite{friggstad2014approximation}.

Battery constraints are pivotal in persistent monitoring missions with extended durations, making battery-constrained planning for persistent coverage a well-researched field. In~\cite{mitchell2015multi}, a heuristic method is presented for finding feasible plans to visit locations, focusing on maximizing the frequency of visits to the least-visited locations. In~\cite{asghar2023multi}, an approximation algorithm is proposed for a single-depot problem where multiple robots must satisfy revisit time constraints for various locations within the environment. In~\cite{hari2022bounds}, the problem of persistent monitoring with a single UAV and a recharging depot is explored using an estimate of visitable locations per charge to determine a solution. In~\cite{asghar2023risk}, the challenge of recharging aerial robots with predefined monitoring paths on moving ground vehicles is addressed.

\section{Problem Statement}
Consider an undirected weighted graph $G=(\{V\cup Q\},E)$ with edge weights $d(i,j)$ for edge $\{i,j\} \in E$, where the edge weights represent travel times for the robot. The vertex set V represents the locations to be visited by the robot, while the set $Q$ represents the recharging depots. The robot operates with limited battery capacity, and the battery discharge time is denoted as $D$. The robot's objective is to visit all the locations in $V$ while ensuring that it does not run out of battery. The formal problem statement is as follows.


\begin{problem}[Multi-depot Recharging TSP]\label{pbm:main}
Given a graph $G=(\{V\cup Q\},E)$ with recharging depots $Q$, and a battery discharge time $D$, find a minimum length walk that visits all vertices in $V$ such that the vehicle does not stay away from the recharging depots in $Q$ more than $D$ amount of time.  
\end{problem}

The edge weights are metric, allowing us to work with a metric completion of the graph. Throughout this paper, we assume that $G$ is a complete graph. The robot requires a specified time $T$ for recharging at a recharging depot $q \in Q$. To simplify our analysis, we can transform this scenario into an equivalent problem instance where the recharge time is zero. We achieve this by adding $T/2$ to the edge weights of all edges connected to $q$ and updating the discharge time to $D+T$. Consequently, we can proceed with our analysis under the assumption that the robot's recharge time is zero. We also assume that the robot moves at unit speed, making time and length equivalent.



Note that, in contrast to the Traveling Salesperson Problem where a tour is required, we are seeking a walk within $G$ as some recharging depots may be visited more than once and some many never be visited. In this paper, we also consider a closely related problem focused on minimizing the number of visits to the recharging depots. This problem setting not only enables us to develop an algorithm for Problem~\ref{pbm:main} but is also inherently interesting, as keeping batteries in high state of charge leads to battery deterioration~\cite{takahashi2014estimating}.

\begin{problem}[Minimum Recharge Cover Problem]\label{pbm:min_recharges}
Given a graph $G=(\{V\cup Q\},E)$ with recharging depots $Q$, and battery discharge time $D$, find a walk with minimum number of recharges that visits all vertices in $V$ such that the vehicle does not stay away from the recharging depots in $Q$ more than $D$ amount of time.  
\end{problem}

In the following section, we present approximation algorithms for these problems and analyze their performance. 

\section{Approximation Algorithm}

We start by observing that the recharging depots visited by a feasible solution cannot be excessively distant from each other. To formalize this, consider the graph of recharging depots denoted as $G_Q=(Q,E_Q)$, where the vertex set $Q$ represents the recharging depots and an edge $\{q_i,q_j\}\in E_Q$ if and only if $d(q_i,q_j)\leq D$. Let $\{\mathcal{C}_1,\ldots,\mathcal{C}_a\}$ be the connected components of this graph. 
\begin{observation}\label{lem:connected_components}
    If a solution to the Problems~\ref{pbm:main} and~\ref{pbm:min_recharges} contains a recharging depot from the connected component $\mathcal{C}_j$, it cannot contain a recharging depot from any other connected component of $G_Q$.
\end{observation}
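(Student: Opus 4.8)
The plan is to show that the sequence of recharging depots visited by any feasible solution, taken in the order in which they are visited, forms a walk in the auxiliary graph $G_Q$; since a walk in a graph stays within a single connected component, the claim follows immediately.

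First I would fix a feasible walk $W$ for Problem~\ref{pbm:main} (the argument for Problem~\ref{pbm:min_recharges} is verbatim the same, since both problems impose exactly the same feasibility constraint on $W$). Let $q^{(1)}, q^{(2)}, \ldots, q^{(k)}$ denote the recharging depots at which $W$ recharges, listed in the order the recharges occur, with repetitions allowed. If $k \le 1$ there is nothing to prove, so assume $k \ge 2$.

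The key step is to look at two consecutive recharges, at $q^{(\ell)}$ and $q^{(\ell+1)}$. By feasibility, the portion of $W$ traversed between leaving $q^{(\ell)}$ and reaching $q^{(\ell+1)}$ contains no depot visit in between, so its length is at most $D$ (the robot may not stay away from the depots for more than $D$ time units and it moves at unit speed). Since the edge weights are metric, the triangle inequality gives $d(q^{(\ell)}, q^{(\ell+1)}) \le D$, so either $q^{(\ell)} = q^{(\ell+1)}$ or $\{q^{(\ell)}, q^{(\ell+1)}\} \in E_Q$. Hence $q^{(1)}, q^{(2)}, \ldots, q^{(k)}$ is a walk in $G_Q$, and therefore all of these depots lie in one and the same connected component $\mathcal{C}_j$. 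In particular $W$ cannot contain depots from two distinct components of $G_Q$.

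I do not expect a genuine obstacle here; the only point that needs a moment's care is the behavior at the ends of $W$ — the segment of $W$ before the first recharge and the segment after the last recharge need not have length at most $D$ — but these segments involve at most one depot each and so are irrelevant to the argument. (If one additionally insists that the robot start and/or finish at a depot with a full battery, those depots simply play the roles of $q^{(1)}$ and/or $q^{(k)}$, and the same reasoning applies unchanged.)
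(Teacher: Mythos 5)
Your argument is correct: between two consecutive depot visits the feasibility constraint bounds the traversed length by $D$, and the triangle inequality then bounds the direct distance, so the visited depots form a walk in $G_Q$ and hence lie in a single connected component. The paper states this as an observation without proof, and your reasoning is exactly the implicit argument the authors intend.
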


Therefore, we can solve the problem by considering recharging depots from one connected component at a time, going through all connected components sequentially, and selecting the best solution.
We will need the following definition of \emph{feasible segments} in order to present the algorithms and their analysis.
\begin{definition}[Feasible Segments]
    The set $S=\{P_1,\ldots,P_a,C_1,\ldots, C_b\}$ containing paths $P_1,\ldots,P_a$ and cycles $C_1,\ldots, C_b$ in graph $G$ is called a set of feasible segments if it satisfies the following.
    \begin{enumerate}
        \item Each path and cycle in $S$ has length at most $D$.
        \item Each cycle in $S$ contains a vertex in $Q$.
        \item Each path in $S$ has both its endpoints in $Q$.
    \end{enumerate}
\end{definition}

To tackle the problem of finding minimum number of recharges to visit all vertices (Problem~\ref{pbm:min_recharges}), we first consider an interim problem, that of finding minimum number of feasible segments that cover the vertex set $V$. Then we will use the solution to this problem to construct a solution to Problems~\ref{pbm:main} and~\ref{pbm:min_recharges}.

\subsection{Minimum Segment Cover Problem}
In~\cite{nagarajan2012approximation}, the single depot problem of finding the minimum number of length-constrained rooted cycles is addressed with an $O(\log D)$ approximation algorithm. We extend this algorithm to find segments instead of cycles in the multi-depot scenario. 

Let the farthest distance to be traveled from the closest depot to any vertex be defined as
\begin{equation*}
    \Delta = \max_{v\in V} \min_{q\in Q}d(v,q).
\end{equation*}

Note that $\Delta \leq D/2 $ in a feasible problem instance.\footnote{This condition is necessary but not sufficient since the closest depots to two different vertices may be farther than $D$ without any depot in between.} Also let $d(Q,v)$ be the distance from vertex $v\in V$ to its nearest recharging depot, i.e., $d(Q,v) = \min_{q\in Q} d(q,v)$.
Now define the slack as $\delta = \frac{D}{2} -\Delta + 1$.

We can now partition the vertices in $V$ by defining the vertex set $V_j$ for $j\in\{0,\ldots,t\}$ where $t= \lceil \log ({D/2\delta}) \rceil$ 
\begin{equation}
\label{eq:vertex_sets}
    V_j =\begin{cases}
        \{v: \frac{D}{2}-\delta < d(Q,v)\leq \Delta \} & \text{if } j=0\\
        \{v: \frac{D}{2}-2^j\delta < d(Q,v)\leq \frac{D}{2}-2^{j-1}\delta \} & \text{otherwise}.
    \end{cases}
\end{equation}

\begin{algorithm}
\DontPrintSemicolon
\caption{\textsc{MinSegmentCover}}
\label{alg:minrechargecover}
\KwIn{Graph $G=(V\cup Q,E)$ with weights $w_{ij}, \forall\{i,j\}\in E$, discharging time $D$}
\KwOut{A set of feasible segments $S$ covering all the vertices in $V$}
  \vspace{0.2em}
  \hrule
  $S\leftarrow \emptyset$ \;
	Define the vertex sets $V_j$ using Equation~\eqref{eq:vertex_sets} \;
    \For {$j=0$ to $\lceil\log(D/2\delta)\rceil$} 
    {
    $\Pi_j \leftarrow$ \textsc{UnrootedPathCover} $(V_j,2^j\delta-1 )$ \;\label{algln:paths}
    \ForEach {$p$ in $\Pi_j$}
    {
    Connect each endpoint of $p$ to its nearest depot\;
    Add the resulting segment to $S$\;
    }
    }
\end{algorithm}
For each partition $V_j$, Algorithm~\ref{alg:minrechargecover} finds the minimum number of paths covering the vertices in $V_j$ such that the length of these paths is no more than $2^j\delta-1$ (line~\ref{algln:paths}). A $3$--approximation algorithm for finding this path cover is presented in~\cite{arkin2006approximations}. The endpoints of each path are then connected to their nearest recharging depots. The following Lemma establishes the approximation ratio of Algorithm~\ref{alg:minrechargecover}.

\begin{lemma}\label{lem:min_recharge_cover}
    If the edge weights of graph are integers, Algorithm~\ref{alg:minrechargecover} is a $O(\log D)$ approximation algorithm to find a minimum cardinality set of feasible segments that covers all vertices in $V$.
\end{lemma}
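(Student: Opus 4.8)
The plan is to verify three things about the output $S$ of Algorithm~\ref{alg:minrechargecover}: that $S$ is a valid set of feasible segments, that $S$ covers $V$, and that $|S| \le O(\log D)\cdot\mathrm{opt}$, where $\mathrm{opt}$ denotes the minimum cardinality of a feasible‑segment set covering $V$. The first two parts are bookkeeping. For feasibility, each path $p\in\Pi_j$ has length at most $2^j\delta-1$ and both endpoints in $V_j$, so each endpoint lies within distance $D/2-2^{j-1}\delta$ of its nearest depot when $j\ge 1$ (resp. within $\Delta=D/2-\delta+1$ when $j=0$); attaching both endpoints to their nearest depots therefore yields a path with endpoints in $Q$ — or, if the two nearest depots coincide or $p$ is a single vertex, a cycle through a depot — of length at most $(2^j\delta-1)+2(D/2-2^{j-1}\delta)=D-1\le D$ (resp. $D-\delta+1\le D$). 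For coverage, the sets $V_0,\dots,V_t$ with $t=\lceil\log(D/2\delta)\rceil$ partition $\{v: d(Q,v)\le\Delta\}$, which is all of $V$ by definition of $\Delta$ (a task vertex with $d(Q,v)=0$ is co‑located with a depot and can be treated as trivially covered), and $\Pi_j$ covers $V_j$, so $S$ covers $V$.

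The heart of the proof is the cardinality bound. Since $t+1=O(\log D)$ and $|S|=\sum_{j=0}^{t}|\Pi_j|$, it suffices to show $|\Pi_j|=O(\mathrm{opt})$ for each $j$. Because \textsc{UnrootedPathCover} is a $3$‑approximation~\cite{arkin2006approximations}, $|\Pi_j|\le 3\,\mathrm{OPT}_j$, where $\mathrm{OPT}_j$ is the minimum number of paths of length at most $2^j\delta-1$ covering $V_j$; hence it is enough to prove $\mathrm{OPT}_j\le c\cdot\mathrm{opt}$ for an absolute constant $c$. I would fix an optimal feasible‑segment solution $\mathcal{O}$ with $|\mathcal{O}|=\mathrm{opt}$, show that the vertices of $V_j$ lying on any single segment $\sigma\in\mathcal{O}$ can be covered by at most $c$ paths of length $\le 2^j\delta-1$, and then take the union of these path families over all $\sigma\in\mathcal{O}$: this covers $\bigcup_\sigma (V_j\cap\sigma)\supseteq V_j$ with at most $c\,\mathrm{opt}$ short paths, so $\mathrm{OPT}_j\le c\,\mathrm{opt}$ and $|S|\le 3c(t+1)\,\mathrm{opt}=O(\log D)\,\mathrm{opt}$.

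To establish the per‑segment claim, view $\sigma$ traversed from one endpoint to the other (if $\sigma$ is a cycle, cut it at one of its depots $q$, obtaining a closed walk of length $L\le D$ whose two ends are $q\in Q$), and parametrize it by arc length on $[0,L]$. Any $v\in V_j$ satisfies $d(Q,v)>D/2-2^j\delta$, and since both ends of $\sigma$ lie in $Q$, the position $x_v$ of $v$ satisfies $x_v\ge d(Q,v)>D/2-2^j\delta$ and $L-x_v\ge d(Q,v)>D/2-2^j\delta$; thus all $V_j$‑vertices on $\sigma$ lie within a sub‑path of arc length at most $L-2(D/2-2^j\delta)\le 2^{j+1}\delta=2(2^j\delta)$. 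Writing $m=2^j\delta-1$ for the target piece length, this sub‑path has length at most $2(m+1)$, so greedily cutting it into pieces of length $\le m$ uses at most four pieces when $m\ge 1$; in the lone degenerate case $m=0$ (i.e.\ $2^j\delta=1$) the sub‑path has length at most $2$ and, with integer weights, at most three distinct vertices, each taken as its own length‑$0$ path. This gives the claim with $c=4$.

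The main obstacle is exactly this last geometric argument: recognizing that feasibility of the segments in $\mathcal{O}$ pins every $V_j$‑vertex deep in the interior of its segment, far from both $Q$‑endpoints, so a single optimal segment can only help cover a short, hence $O(1)$‑coverable, stretch of $V_j$. The remaining wrinkles relative to the single‑depot cycle‑cover analysis of~\cite{nagarajan2012approximation} are handling a cycle by cutting it at one of its depots and dealing with the degenerate slack case $2^j\delta=1$.
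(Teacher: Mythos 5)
Your proposal is correct and follows essentially the same route as the paper's proof: the same feasibility calculation for the attached depot edges, and the same key observation that every $V_j$-vertex on an optimal segment lies at distance more than $D/2-2^j\delta$ from both $Q$-endpoints, confining those vertices to a stretch of length $O(2^j\delta)$ that splits into $O(1)$ paths of length at most $2^j\delta-1$, which combined with the $3$-approximate unrooted path cover and the $O(\log D)$ levels gives the bound. Your treatment is slightly more careful about coverage, cycles, and the degenerate case $2^j\delta=1$ (at the cost of a constant $4$ versus the paper's $2$, since the paper uses integrality to shave the stretch length to $2^{j+1}\delta-2$), but the argument is the same.
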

\begin{proof}
The segments in $S$ are feasible because each path in $\Pi_j$ in line~\ref{algln:paths} has length $2^j\delta-1$ and since the endpoints of this path are in $V_j$, the closest depots cannot be more than $D/2-2^{j-1}\delta$ by Equation~\eqref{eq:vertex_sets}. Connecting two closest depots to the endpoints results in the total length $2^j\delta-1 +2(D/2-2^{j-1}\delta) = D-1$ for each segment. (For $j=0$ the path length is at most $\delta-1$ and the closest depots to endpoints are less than $\Delta$ distance away resulting in the maximum segment length $\delta-1+2\Delta = D/2+\Delta\leq D$.)

Let $S^*$ represent an optimal solution with the minimum number of feasible segments. Consider any segment of $S^*$, and let $s_j$ represent the path induced by that segment on the set $V_j$. Since every vertex in $V_j$ is at least $\frac{D}{2}-2^j\delta +1$ distance from any $q\in Q$, the length of $s_j$ is at most $D-2(\frac{D}{2}-2^j\delta +1) = 2^{j+1}\delta-2$. We can split this path $s_j$ to get two paths of length at most $2^j\delta-1$. If we do this with every segment in $S^*$, we get at most $2|S^*|$ paths that cover $V_j$ and are of length $2^j\delta-1$. Therefore running the 3 approximation algorithm for the unrooted path cover problem with bound $2^j\delta-1$ gives us at most $6|S^*|$ paths. Moreover, since the number of subsets is $t= \lceil \log ({D/2\delta}) \rceil$, the number of segments returned by algorithm~\ref{alg:minrechargecover} is $|S|\leq 6\log(D)|S^*|$.
\end{proof}

\begin{remark}
\label{rem:logn}
    This segment cover problem can also be posed as a set cover problem where the collection of sets consists of all the possible feasible segments, and we have to pick the minimum number of subsets to cover all the vertices. This problem can be solved greedily by using an approximation algorithm to solve the Orienteering problem between all pairs of depots at each step. This results in a $O(\log(n))$ approximation ratio for the problem of finding minimum number of feasible segments to cover $V$. Note that the two approximation ratios $O(\log (n))$ and $O(\log(D))$ are incomparable in general, however, for a given robot with fixed $D$, increasing the environment size would not affect the approximation ratio in case of $O(\log D)$ algorithm.
\end{remark}

Note that the segments returned by Algorithm~\ref{alg:minrechargecover} may not be traversable by the vehicle in a simple manner, since the segments may not be connected. In the following subsection we present an algorithm to construct a feasible walk in the graph given the set of segments.

\subsection{Algorithm for the feasible walk}

If some segments are close to each other, the robot can visit these segments before moving on to farther areas of the environment. To formalize this approach, we need the following definition of Neighboring Segments. 
\begin{definition}[Neighboring Segments]
\label{def:neighboring_seg}
Any two segments $s_i$ and $s_j$ in the set of feasible segments $S$ are considered neighboring segments if there exist depot vertices $q_i$ in $s_i$ and $q_j$ in $s_j$ such that either 1) $q_i= q_j$, or 2) $d(q_i,q_j)\leq D$, or 3) it is possible to traverse from $q_i$ to $q_j$ with at most one extra recharge, i.e., there exists $q\in Q$ such that $d(q_i,q)\leq D$, and $d(q,q_j)\leq D$.
\end{definition}
We will refer to the set of neighboring segments $S_i$ as the maximal set of segments such that every segment in $S_i$ is a neighbor of at least one other segment in $S_i$.

The approximation algorithm is outlined in Algorithm~\ref{alg:approx}. In Line~\ref{algln:minsegment}, we construct a set $S$ of feasible segments. Then we create a graph $G_S$ where each vertex corresponds to a set of neighboring segments in $S$ and the edge weights represent the minimum number of recharges required to travel between these sets. This graph is both metric and complete. A TSP solution is then determined for $G_S$, and the set of neighboring segments are visited in the order given by the TSP. Inside each set of neighboring segments, the segments are traversed as explained in the following lemma.

\begin{lemma}
\label{prop:nbr_segments}
    Given a set of neighboring segments $S_i$ with $|S_i|$ segments, a traversable walk can be constructed visiting all the vertices covered by $S_i$ in at most $6|S_i|$ recharges.
\end{lemma}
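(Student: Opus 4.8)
The plan is to show that the neighboring-segments relation gives enough connectivity to stitch the segments of $S_i$ into a single walk, paying only a constant number of recharges per segment. First I would build an auxiliary graph $H_i$ on the segments of $S_i$, placing an edge between two segments whenever they are neighbors in the sense of Definition~\ref{def:neighboring_seg}. By the definition of a \emph{set} of neighboring segments, $H_i$ is connected, so it contains a spanning tree $T_i$ with $|S_i|-1$ edges. The walk will be constructed by traversing $T_i$ in a depth-first manner: each segment is ``entered'' through one of its depot vertices, traversed fully (covering all vertices it contains), and then the robot moves along a tree edge to a child segment and eventually backtracks.

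The key accounting step is to bound the recharges used. Traversing a single feasible segment (a path or cycle of length at most $D$ with the required depot endpoints/vertex) from a depot vertex back to a depot vertex costs at most a constant number of recharges — at most $2$, since a path can be walked out and back, and a cycle once around, each arc having length at most $D$. Moving along a tree edge between two neighboring segments costs at most $2$ recharges by case~(3) of Definition~\ref{def:neighboring_seg} (direct hop if $d(q_i,q_j)\le D$, or one intermediate depot otherwise). Since the DFS traverses each tree edge twice (once down, once back) and visits each of the $|S_i|$ segments, the total is at most $2|S_i|$ recharges for the segments themselves plus $2\cdot 2\cdot(|S_i|-1)$ recharges for the tree-edge traversals; I would then bound this crude sum by $6|S_i|$ and tighten the per-segment and per-edge constants as needed so the arithmetic closes at exactly $6|S_i|$.

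I expect the main obstacle to be the bookkeeping at the junctions: when the DFS arrives at a segment and must later leave it (possibly multiple times, once per child in $T_i$), the robot may need to reposition between different depot vertices \emph{of the same segment}, and I must argue this repositioning is free or cheap — which it is, because any depot-to-depot sub-walk along a feasible segment costs at most one recharge (the whole segment has length $\le D$), and two depots in the same segment can also be connected directly within the segment. A secondary subtlety is ensuring that ``covering all vertices of $S_i$'' is genuinely achieved: each vertex lies on some segment, and the DFS visits every segment and traverses it fully, so coverage follows. Folding all these $\le 2$-per-operation costs together and checking that the number of operations (segment traversals, downward edge hops, upward edge hops, intra-segment repositionings) is at most $3|S_i|$ gives the claimed $6|S_i|$ bound; making that count exact is the one place requiring care.
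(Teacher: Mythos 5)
Your proposal is correct and follows essentially the same approach as the paper: a spanning tree over the connectivity structure of the neighboring segments, traversed with each edge doubled, charging a constant number of recharges per segment and per connecting hop to reach the $6|S_i|$ bound. The paper's only difference is that it builds the spanning tree on the \emph{depots} (the at most $2|S_i|$ segment-endpoint depots plus at most $|S_i|$ intermediate connecting depots, giving at most $3|S_i|-1$ edges, each costing one recharge and traversed twice), which automatically absorbs the intra-segment ``repositioning'' bookkeeping you flag as the delicate step, since both endpoints of a path segment are already tree nodes and the move between them is just another counted tree edge.
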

\begin{proof}
We can construct a Minimum Spanning Tree $\mathcal{T}$ of the depots in $S_i$ and traverse that spanning tree to visit all the recharging depots in $S_i$, and therefore all the vertices covered by $S_i$. Note that we cannot use the usual short-cutting method for constructing Travelling Salesman Tours from Miniumum Spanning Trees because that may result in edges that are longer than $D$. Since each segment in $S_i$ can be a path with a recharging depots at each end, these segments contribute $2|S_i|$ recharging depots. Also, in the worst case, each neighboring segment is one recharge away from its closet neighbor resulting in $|S_i|$ connecting depots. Thus we have a Minimum Spanning Tree with at most $3|S_i|-1$ edges (with a recharge required after traversing each edge), and in the worst case we may have to traverse each edge twice resulting in at most $6|S_i|$ recharges.
\end{proof}




\begin{algorithm}
\DontPrintSemicolon

    \caption{\textsc{ApproximationAlgorithm}}
    \label{alg:approx}
    \KwIn{Graph $G=(V\cup Q,E)$ with weights $w_{ij}, \forall\{i,j\}\in E$, discharging time $D$}
    \KwOut{A feasible walk covering all $v\in V$}
    \vspace{0.2em}
  \hrule
    $S\leftarrow$\textsc{MinSegmentCover}$(G,D,Q)$\;\label{algln:minsegment}
    $\{S_1,\ldots,S_k\}\leftarrow$ Neighboring segments from $S$\;
    $U \leftarrow$ Create an empty set of vertices\;
    $E_S \leftarrow$ Create an empty set of edges\;
    
    \ForEach{set of neighboring segments $S_i \in S$}{
        Add a vertex $u_i$ to $U$\;\label{algln:1}
    }
    
    \ForEach{$u_i, u_j \in U$, $u_i \neq u_j$}{
        $d_Q(S_i, S_j) \leftarrow$ minimum number of recharges required to go from any depot in $S_i$ to any depot in $S_j$\;
        Add edge $\{u_i, u_j\}$ with weight $d_Q(S_i, S_j)$ to $E_S$\;\label{algln:2}
    }
    
    
    $\texttt{TSP} \leftarrow$ Apply Christofides' algorithm to $G_S=(U,E_S)$\;
    
    
    \ForEach{$S_i$ in the order given by $\texttt{TSP}$}{
        $\texttt{MST}_i\leftarrow$ Minimum Spanning Tree of depots in $S_i$\;
        Traverse segments connected to depots in $\texttt{MST}_i$
    }
    
\end{algorithm}

Now we show that the cost of the TSP on $G_S$ is within a constant factor of the optimal number of recharges.
\begin{lemma}
\label{prop:TSP_seg}
The cost of the TSP solution to $G_S$ is not more than the $4.5$ times the optimal number of recharges required to visit $V$.
\end{lemma}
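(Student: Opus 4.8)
The plan is to produce a short tour on the metric graph $G_S$ directly from an optimal solution of Problem~\ref{pbm:min_recharges}, and then pay the $\tfrac32$ overhead of Christofides' algorithm. Writing $\Ropt$ for the optimal number of recharges needed to visit $V$, I would aim to show that \emph{some} tour of $G_S$ uses at most $3\Ropt$ recharges, so that an optimal tour of $G_S$ -- and hence the tour returned by Christofides, which is within $\tfrac32$ of optimal -- costs at most $4.5\,\Ropt$ recharges.

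First I would fix an optimal walk $W^{\ast}$ with $\Ropt$ recharges and list the depots $q_1,\dots,q_{\Ropt}$ at which it recharges, in the order they occur along $W^{\ast}$. Since every battery cycle of $W^{\ast}$ has length at most $D$, consecutive depots satisfy $d(q_i,q_{i+1})\le D$, so they are adjacent in $G_Q$ and the closed walk $q_1\to q_2\to\cdots\to q_{\Ropt}\to q_1$ traverses at most $\Ropt$ such edges, i.e.\ costs at most $\Ropt$ recharges. (The two ends of $W^{\ast}$ need not lie at depots; this affects only lower-order terms in the recharge count, which I would fold into the constant.)

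The heart of the argument is the structural claim that the map sending each cluster of neighboring segments to a recharge depot of $W^{\ast}$ is injective, which yields $k\le\Ropt$, where $k$ is the number of clusters. Each cluster $S_j$ covers some $v_j\in V$. On one hand $W^{\ast}$ visits $v_j$ on a battery cycle of length at most $D$, so $v_j$ is within $D/2$ of one endpoint $q_{i(j)}$ of that cycle; on the other hand $v_j$ lies on a feasible segment of $S_j$ of length at most $D$, so $v_j$ is within $D/2$ of some depot $\alpha_j$ of that segment. Hence $d(\alpha_j,q_{i(j)})\le D$. If two distinct clusters were assigned the same depot $q$, there would be depots $\alpha_j$ and $\alpha_{j'}$ in segments of these clusters with $d(\alpha_j,q)\le D$ and $d(q,\alpha_{j'})\le D$, so by condition~(3) of Definition~\ref{def:neighboring_seg} those two segments are neighboring and the two clusters coincide -- a contradiction.

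Finally I would convert $q_1\to\cdots\to q_{\Ropt}\to q_1$ into a tour of $G_S$: at $q_{i(j)}$ insert a detour out to $\alpha_j$ and back, which costs at most $2$ recharges because each leg has length at most $d(q_{i(j)},v_j)+d(v_j,\alpha_j)\le D$ and which reaches the vertex $u_j$ of $G_S$. One detour per cluster adds at most $2k\le 2\Ropt$ recharges, giving a closed walk of cost at most $3\Ropt$ that touches a depot of every cluster; short-cutting this walk onto $\{u_1,\dots,u_k\}$ using the triangle inequality of $d_Q$ produces the claimed tour, and Christofides' algorithm then returns a tour of $G_S$ of cost at most $\tfrac32\cdot 3\Ropt=4.5\,\Ropt$. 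I expect the injectivity claim to be the main obstacle -- recognizing that ``two covered vertices sharing a nearby recharge depot'' is exactly the third alternative in the definition of neighboring segments, so the clustering step has already merged precisely those configurations; the remaining points (the endpoints of $W^{\ast}$, and the $\pm1$ slack in counting recharges along a detour and in closing the walk) are routine book-keeping absorbed into the constant.
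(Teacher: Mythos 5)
Your argument matches the paper's proof in all essentials: you associate with each cluster of neighboring segments a depot of the optimal solution within $D/2$ of a covered vertex, prove these depots are distinct by exactly the paper's contradiction via condition (3) of Definition~\ref{def:neighboring_seg}, obtain the factor $3$ (your two detour hops per cluster are the paper's bound $d_Q(S_i,S_j)\le 3\,d_Q(q_i^*,q_j^*)$ in disguise), and finish with Christofides' $3/2$. The only difference is cosmetic---you build an explicit tour of $G_S$ from the recharge order of the optimal walk rather than scaling an optimal TSP tour on the representative depots---and the end/closure effects you fold into the constant are treated with the same looseness in the paper's own proof.
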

\begin{proof}
Any optimal solution must visit the vertices covered by $S_i$ at least once for all $i$. Let $q_i^{*}$ be a recharging depot visited by the optimal solution that is within $D/2$ distance of some vertex $v_i$ in $S_i$. 
\begin{claim}
    There exists a unique $q_i^{*}$ for every set of neighboring segment $S_i$. Also the minimum number of recharges required to go from $S_i$ to $S_j$ is $d_Q(S_i,S_j)\leq 3d_Q(q_i^*,q_j^*)$.
\end{claim}
\begin{proof}
    For a vertex $v_i$ visited be one of the segments in $S_i$, the recharging depot $q_i^*$ always exists within $D/2$ of $v_i$ because if there is no such depot, the vertex $v_i$ cannot be visited by the optimal solution. Also for any $i,j$ such that $i\neq j$, $q_i^* \neq q_j^*$. If this is not true, i.e., $q_i^* = q_j^* = q^*$ , then the distance between $v_i\in S_i$  and $v_j\in S_j$ is at most $D$ due to triangle inequality. Let $q_i$ be the recharging depot associated with the segment in $S_i$ that visits $v_i$ and is within $D/2$ distance of $v_i$ (if the segment is a cycle, $q_i$ is the depot in that cycle and if the segment is a path, $q_i$ is the closer of the two depots to $v_i$). Similarly let $q_j$ be the recharging depot associated with $v_j$. Then $d(q_i,q^*)\leq D$ and $d(q^*,q_j)\leq D$. Therefore we can go from $q_i$ in $S_i$ to $q_j$ in $S_j$ with one recharge at $q^{*}$. This means that $S_i$ and $S_j$ are not two different sets of neighboring segments by Definition~\ref{def:neighboring_seg} which results in a contradiction. Therefore  $q_i^* \neq q_j^*$ for $i\neq j$. This implies that the minimum number of recharges required to go from $q_i$ to $q_j$, i.e., $d_Q(q_i,q_j)$ is at most $3d_Q(q_i^*,q_j^*)$. Therefore $d_Q(S_i,S_j)\leq d_Q(q_i,q_j)\leq 3d_Q(q_i^*,q_j^*)$.
\end{proof} 

Since any optimal solution must visit each $S_i$ at least once, the number of recharges required by the optimal solution must be at least equal to the cost of the optimal TSP solution visiting $q_i^*$ for all set of neighboring segments $i$. Hence the cost of the TSP solution on $G_S$ returned by the Christophedes' approximation algorithm is within $4.5$ times the number of recharges required by the optimal solution.
\end{proof}

We can now use Lemmas~\ref{prop:nbr_segments} and~\ref{prop:TSP_seg} to give abound on the number of visits to the recharging depots made by the walk returned by Algorithm~\ref{alg:approx}. Note that, if we have a better algorithm for the minimum segment cover problem, the approximation ratio for Problem~\ref{pbm:min_recharges} also improves.

\begin{theorem}
\label{thm:min_recharge}
    Given a $O(h(G,D))$ approximation algorithm for the problem of finding a minimum cardinality set of feasible segments covering the vertices in $V$, we have a $O(ch(G,D))$ approximation algorithm for Problem~\ref{pbm:min_recharges}, where $c$ is a constant.
\end{theorem}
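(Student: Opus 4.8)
The plan is to chain three estimates. Write $\rho^{*}$ for the optimal number of recharges in Problem~\ref{pbm:min_recharges} and $\sigma^{*}$ for the minimum number of feasible segments covering $V$ (so the hypothesis gives an algorithm returning $O(h(G,D)\,\sigma^{*})$ segments). First I would show $\sigma^{*} = O(\rho^{*})$; then, running the assumed approximation (e.g.\ Algorithm~\ref{alg:minrechargecover}, cf.\ Lemma~\ref{lem:min_recharge_cover}), I get a segment cover $S$ with $|S| = O\!\big(h(G,D)\,\sigma^{*}\big) = O\!\big(h(G,D)\,\rho^{*}\big)$; finally, feeding this $S$ into Algorithm~\ref{alg:approx} and invoking Lemmas~\ref{prop:nbr_segments} and~\ref{prop:TSP_seg}, I bound the number of recharges of the resulting walk by $O(\rho^{*}) + O(|S|) = O\!\big(h(G,D)\,\rho^{*}\big)$. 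Combining the three gives the claimed $O(c\,h(G,D))$ ratio, with $c$ coming from the constants in these steps.

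Step one, $\sigma^{*} = O(\rho^{*})$, is the crux. I would take an optimal walk $W^{*}$ for Problem~\ref{pbm:min_recharges} with $\rho^{*}$ recharges. By Observation~\ref{lem:connected_components} all depots it uses lie in one connected component of $G_{Q}$, and since the algorithm is run on every component and the best output kept, it suffices to analyze that component. Cutting $W^{*}$ at its $\rho^{*}$ recharge events yields at most $\rho^{*}+1$ sub-walks, each of length at most $D$. Every interior sub-walk starts and ends at a depot, so shortcutting it in the metric graph (which does not increase length) turns it into a simple path of length at most $D$ with both endpoints in $Q$, i.e.\ a feasible segment. The first and last sub-walks may have a dangling endpoint at a non-depot vertex $v$; since feasibility forces some depot within $D/2$ of $v$, I would attach that depot and, if the length then exceeds $D$, split once, converting each of these two pieces into $O(1)$ feasible segments. (If $\rho^{*}=0$ the whole walk has length at most $D$ and the same nearest-depot trick covers it with $O(1)$ segments.) Hence $\sigma^{*} \le \rho^{*} + O(1)$, which is $O(\rho^{*})$ for every non-trivial instance.

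For step three, Algorithm~\ref{alg:approx} groups $S$ into neighboring-segment sets $S_{1},\dots,S_{k}$, builds the complete metric graph $G_{S}$ with recharge-distance weights, runs Christofides on it, and then traverses each $S_{i}$ along an MST of its depots. By Lemma~\ref{prop:TSP_seg} the Christofides tour on $G_{S}$ costs at most $4.5\,\rho^{*}$ recharges, which accounts for all recharges spent moving between the sets; by Lemma~\ref{prop:nbr_segments} the recharges spent covering the vertices inside $S_{i}$ number at most $6|S_{i}|$, so the within-set total is at most $\sum_{i}6|S_{i}| = 6|S|$. Therefore the returned walk is feasible and uses at most $4.5\,\rho^{*} + 6|S|$ recharges; substituting $|S| = O(h(G,D)\,\rho^{*})$ from steps one and two gives $O\!\big(h(G,D)\,\rho^{*}\big)$ recharges, completing the argument.

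The main obstacle is the endpoint bookkeeping in step one: turning the prefix and suffix pieces of $W^{*}$ (and the whole walk when $\rho^{*}=0$) into genuine feasible segments --- paths with both endpoints in $Q$, or cycles through $Q$ --- without breaking the length-$\le D$ constraint or inflating the count by more than an additive constant. Everything after that is routine arithmetic layered on top of the already-established Lemmas~\ref{prop:nbr_segments} and~\ref{prop:TSP_seg}.
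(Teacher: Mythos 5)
Your proposal is correct and follows essentially the same route as the paper: bound the walk's recharges by $6|S|+4.5\,\rho^{*}$ via Lemmas~\ref{prop:nbr_segments} and~\ref{prop:TSP_seg}, then bound $|S|$ by $O(h(G,D))\,\rho^{*}$. The only difference is that you explicitly justify $\sigma^{*}=O(\rho^{*})$ by cutting the optimal walk at its recharge events and patching the two end pieces, a step the paper's proof leaves implicit when it asserts $|S|\leq O(h(G,D))\,\texttt{OPT}$; that elaboration is sound and strengthens the argument rather than changing it.
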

\begin{proof}
Let the optimal number of visits to the recharging depots be $\texttt{OPT}$. By Propositions~\ref{prop:nbr_segments} and~\ref{prop:TSP_seg}, the total number of recharges required will be at most $6|S|+4.5\texttt{OPT}$. Since $|S|\leq O(h(G,D))\texttt{OPT}$, we have a $ O(6h(G,D)+4.5)$ approximation ratio for Problem~\ref{pbm:min_recharges}.
\end{proof}

By Lemma~\ref{lem:min_recharge_cover} and Remark~\ref{rem:logn}, $h(G,D) = O(\min\{\log(D),\log(n)\})$. Therefore, we have a $O(\min\{\log(D),\log(n)\})$ approximation algorithm for Problem~\ref{pbm:min_recharges}.

We now show that the approximation algorithm for Problem~\ref{pbm:min_recharges} is also an approximation algorithm for Problem~\ref{pbm:main}. 
In the following proof, the feasible segment $S$ may also contain segments that do not visit any other vertices and are just direct paths connecting one recharging depot to another.
\begin{theorem}
\label{thm:main}
    A $O(f(G,D))$ approximation algorithm for Problem~\ref{pbm:min_recharges} is also a $O(4 f(G,D))$ approximation algorithm for Problem~\ref{pbm:main}.
\end{theorem}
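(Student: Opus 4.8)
The plan is to relate the two objectives — number of recharges (Problem~\ref{pbm:min_recharges}) and total walk length (Problem~\ref{pbm:main}) — through the feasible-segment structure, using the fact that each feasible segment has length at most $D$ and that a walk with $k$ recharges can be decomposed into at most $k+1$ feasible pieces, each of length at most $D$. First I would fix an optimal solution $W^*$ to Problem~\ref{pbm:main}, of length $\Ropt$, and let $r^*$ be the number of recharges it makes. Since each maximal sub-walk of $W^*$ between consecutive recharges (and before the first / after the last) has length at most $D$, we get $\Ropt \ge (r^*-1) \cdot (\text{something})$ in one direction, but more usefully $r^* \le \Ropt / \ell_{\min} + O(1)$ is the wrong bound; instead the clean inequality is that $W^*$ is itself a feasible solution to Problem~\ref{pbm:min_recharges}, so the optimal recharge count $\texttt{OPT}_{\text{rech}}$ satisfies $\texttt{OPT}_{\text{rech}} \le r^*$. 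Conversely, I would argue that $r^* \cdot D \ge$ is not quite it either — the key is the reverse: any walk realizing $\texttt{OPT}_{\text{rech}}$ recharges has length at most $(\texttt{OPT}_{\text{rech}}+1)D$, but to compare against $\Ropt$ I need a lower bound on $\Ropt$ in terms of $\texttt{OPT}_{\text{rech}}$, namely $\Ropt \ge \texttt{OPT}_{\text{rech}} \cdot \delta$ or similar, coming from the fact that between recharges the robot must make genuine progress; more robustly, $\Ropt \ge \frac{D}{2}$ whenever any recharge is needed, and iterating, $\Ropt \ge (\texttt{OPT}_{\text{rech}} - O(1)) \cdot \frac{\delta}{2}$. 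This is the step I expect to be the main obstacle: extracting a clean lower bound on optimal length in terms of the optimal number of recharges.

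The cleaner route, which I would pursue, avoids that obstacle by going through segments directly. Run the $O(f(G,D))$ algorithm for Problem~\ref{pbm:min_recharges}; it returns a feasible walk $W$ making at most $O(f(G,D)) \cdot \texttt{OPT}_{\text{rech}}$ recharges, and hence $W$ decomposes into at most $O(f(G,D)) \cdot \texttt{OPT}_{\text{rech}} + O(1)$ feasible segments, each of length at most $D$. So the length of $W$ is at most $O(f(G,D)) \cdot \texttt{OPT}_{\text{rech}} \cdot D$ plus the cost of the TSP tour on $G_S$ lifted back to $G$, where each edge of $G_S$ of weight $w$ (measured in recharges) corresponds to a path in $G$ of length at most $w \cdot D$. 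Thus $\mathrm{len}(W) \le D \cdot (\text{total recharges of } W) = D \cdot O(f(G,D)) \cdot \texttt{OPT}_{\text{rech}}$.

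It remains to bound $\texttt{OPT}_{\text{rech}} \cdot D$ against $\Ropt$, and here I would use the partition of $V$ into the sets $V_j$ and the slack $\delta$. Since the instance is feasible, $\Delta \le D/2$ and $\delta \ge 1$; any vertex $v$ that the optimal walk $W^*$ visits forces $W^*$ to come within $D/2$ of $v$ and return, but more to the point, an optimal walk of length $\Ropt$ can be cut at every point where the remaining battery would hit zero, producing at most $\lceil \Ropt / \delta \rceil$-ish feasible segments, so $\texttt{OPT}_{\text{rech}} \le \Ropt / \delta + O(1)$; combined with $\delta = D/2 - \Delta + 1 \ge 1$ this still leaves a factor $D$ gap. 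The resolution — and the reason the theorem only claims a constant factor $4$ — is that each segment of length at most $D$ that $W^*$ uses is paid for by $W^*$ at its full length, so $\Ropt \ge (\text{number of segments in } W^*) \cdot (\text{average segment length})$, and one shows the number of segments needed by $W^*$ is within a constant of $\texttt{OPT}_{\text{rech}}$ while also $\Ropt \ge \texttt{OPT}_{\text{rech}} \cdot \Theta(D)$ because almost every one of those segments has length $\Omega(D)$ (a segment far shorter than $D$ could be merged with a neighbor, contradicting optimality of the recharge count up to constants). Putting these together, $\mathrm{len}(W) \le D \cdot O(f(G,D)) \cdot \texttt{OPT}_{\text{rech}} \le O(f(G,D)) \cdot O(1) \cdot \Ropt$, and tracking the constants through Lemmas~\ref{prop:nbr_segments} and~\ref{prop:TSP_seg} yields the factor $4$. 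I expect the delicate point to be making rigorous the claim that an optimal-length walk cannot contain many short feasible segments without contradicting near-optimality of the recharge count; I would handle it by a direct merging argument at the depot graph $G_Q$ level, using Observation~\ref{lem:connected_components} to stay within one connected component.
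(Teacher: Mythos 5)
Your final chain of inequalities is exactly the paper's argument: bound the output length by $D$ times its number of segments, relate that count via the $f$-approximation to the minimum segment cover, and lower-bound $\Ropt$ by $\Theta(D)$ times the segment count of the length-optimal walk via a merging argument. Two small corrections make the sketch rigorous: the merging step should be ``if two \emph{consecutive} segments of the length-optimal walk each have length $<D/2$, the intermediate recharge can be dropped with no change in length,'' which is a length-preserving normalization showing WLOG at least half the segments have length $\ge D/2$ and hence $\Ropt \ge \frac{D}{4}|\tilde S|$ --- it is not a contradiction with recharge-count optimality, since the length-optimal walk need not be recharge-optimal and a lone short segment next to a long one cannot be merged. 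Also, you only need the one-sided inequality $|\tilde S|\ge |S^*|$ (the optimal walk's segments form a feasible cover), not that the two counts agree up to constants; the reverse direction is false in general and is not used.
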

\begin{proof}
Let an optimal solution to Problem~\ref{pbm:main} correspond to the feasible segment set $\Tilde{S}$. Also let the total length of the segments in $\Tilde{S}$ be $\ell^*$ which is the cost of the optimal solution. Note that in that set at least half the segments have length greater than or equal to $D/2$. Otherwise, there are at least two consecutive segments in the optimal solution of length less than $D/2$ and therefore one recharging visit can be skipped with no increase the cost. Therefore, 
\begin{align*}
    \ell^* \geq \frac{D}{2}\frac{|\Tilde{S}|}{2} 
\end{align*}
Now consider the approximate solution to Problem~\ref{pbm:min_recharges}, corresponding to feasible segment set $S$. Note that this feasible segment set now may contain some segments that are connecting vertices in $Q$ without visiting any other vertex in between. Also let $S^*$ be the feasible segment set corresponding to the optimal solution of Problem~\ref{pbm:min_recharges}. Then $|\Tilde{S}|\geq |S^*| $, and by the approximation ratio, $|S|\leq f(G,D)|S^*|$. Therefore, the total length of the segments in $S$ or the cost of our solution is given by
\begin{align*}
    \ell(S) = D|S|\leq Df|S^*|&\leq Df|\Tilde{S}| 
    \leq 4f \ell^*.
\end{align*}
\end{proof}
The runtime of Algorithm~\ref{alg:approx} comprises the runtime of Algorithm~\ref{alg:Heuristic} in addition to the time taken to construct the graph $G_S$ and execute a TSP algorithm on it. As $G_S$ contains at most $O(|V|)$ nodes, and we employ Christofides' algorithm to solve the TSP, this runtime is polynomial. Furthermore, Algorithm~\ref{alg:minrechargecover} solves the unrooted path cover problem for each partition $V_i$. Since the approximation algorithm for unrooted path cover from~\cite{arkin2006approximations} operates in polynomial time, the overall runtime of Algorithm~\ref{alg:approx} is polynomial.

In the next section we suggest some heusristic improvements to the approximation algorithm.

\begin{figure*}[t]
  \centering
  \begin{subfigure}[b]{0.2\textwidth}
    \includegraphics[width=\textwidth]{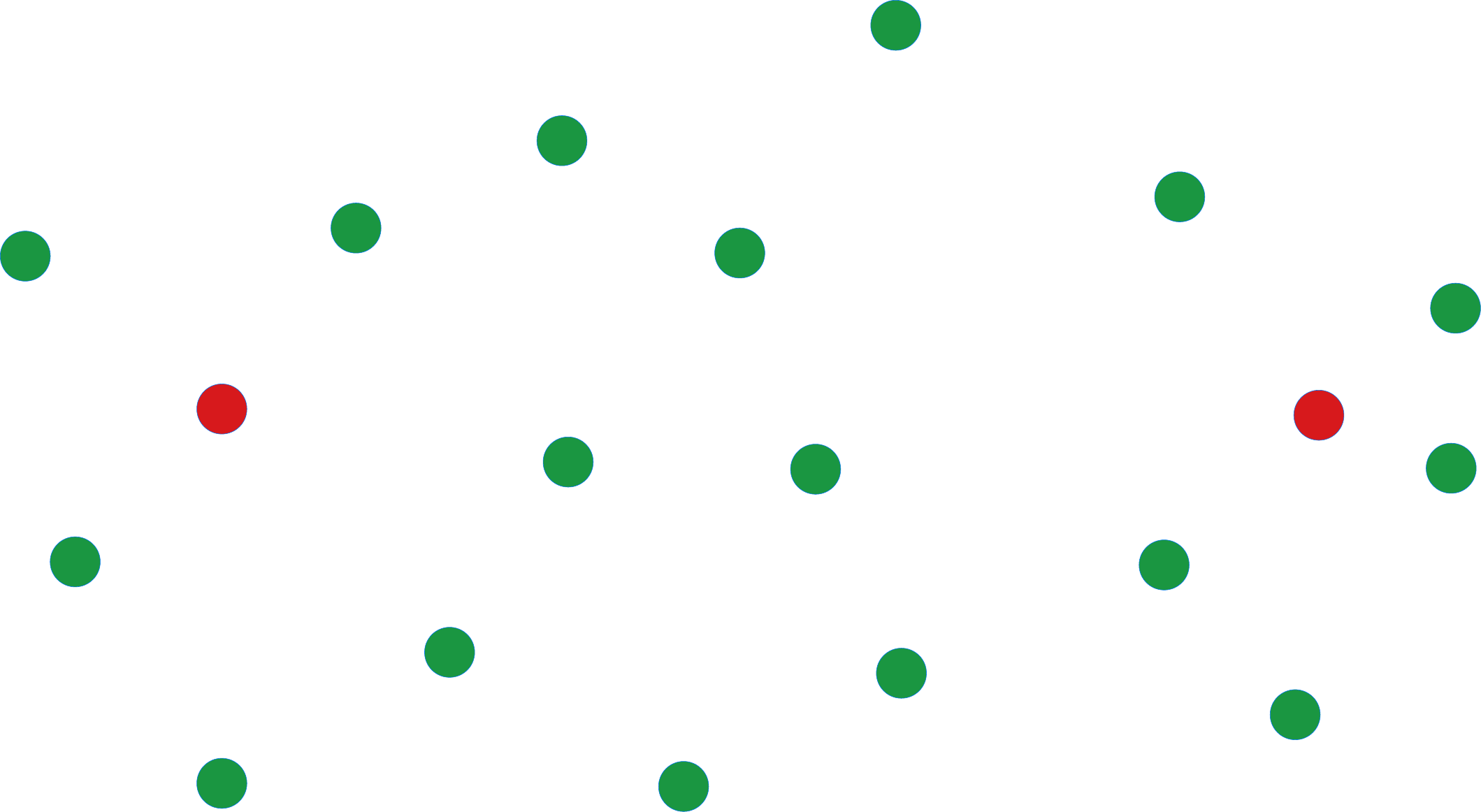}
    \caption{}
    \label{fig:1a}
  \end{subfigure}\hspace{5em}
  \begin{subfigure}[b]{0.2\textwidth}
    \includegraphics[width=\textwidth]{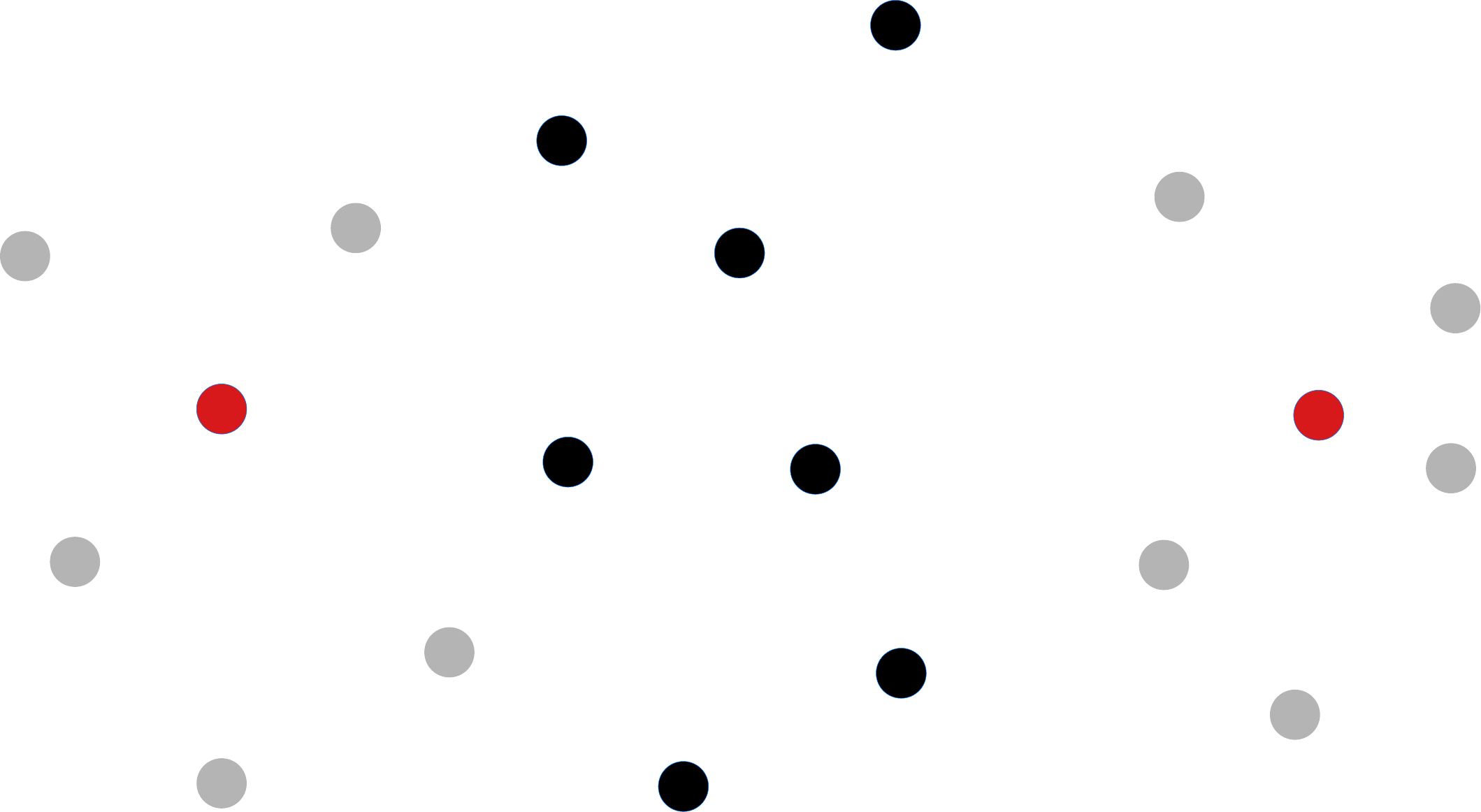}
    \caption{}
    \label{fig:1b}
  \end{subfigure}\hspace{5em}
  \begin{subfigure}[b]{0.2\textwidth}
    \includegraphics[width=\textwidth]{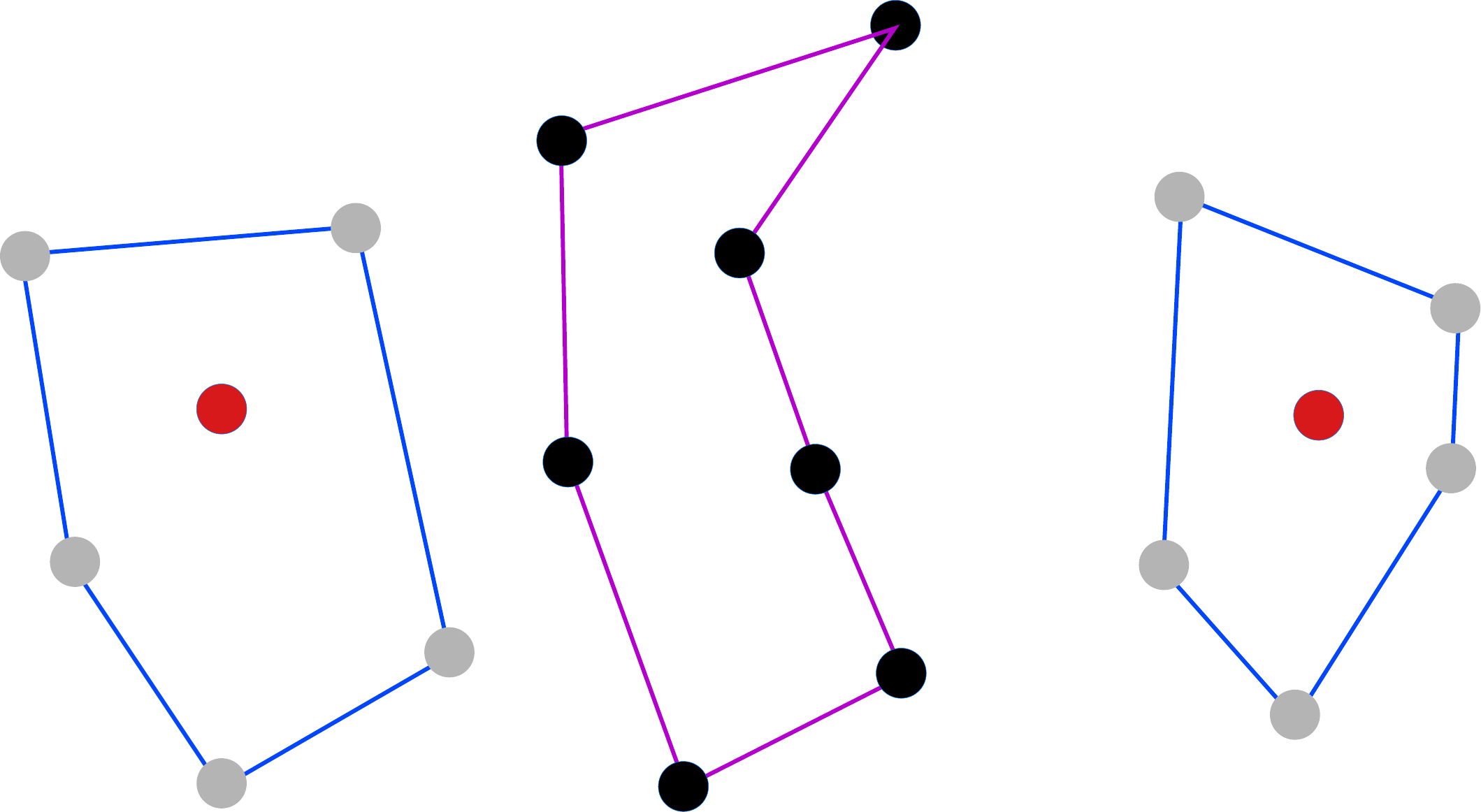}
    \caption{}
    \label{fig:1c}
  \end{subfigure}
  
  \begin{subfigure}[b]{0.2\textwidth}
    \includegraphics[width=\textwidth]{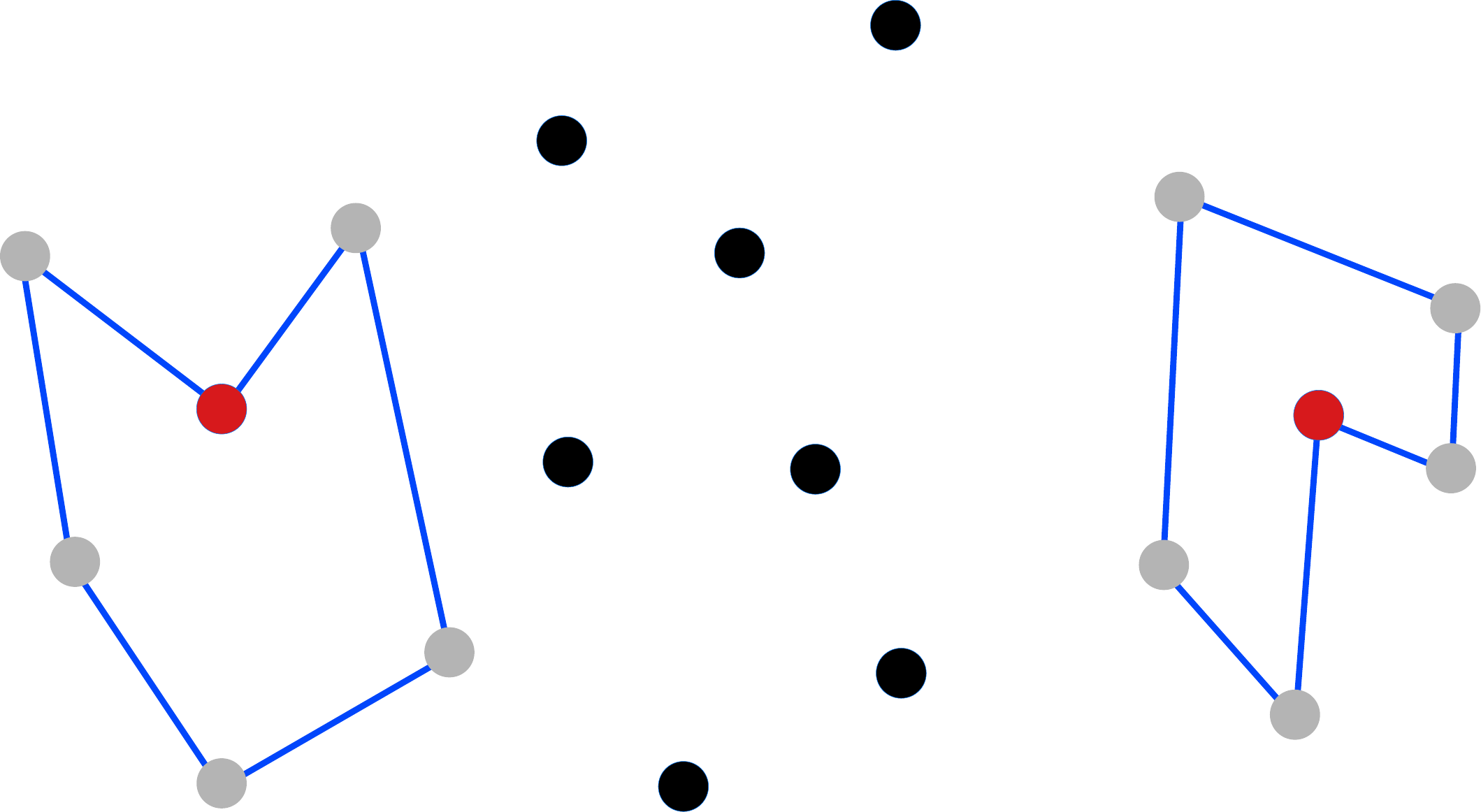}
    \caption{}
    \label{fig:2a}
  \end{subfigure}\hspace{5em}
  \begin{subfigure}[b]{0.2\textwidth}
    \includegraphics[width=\textwidth]{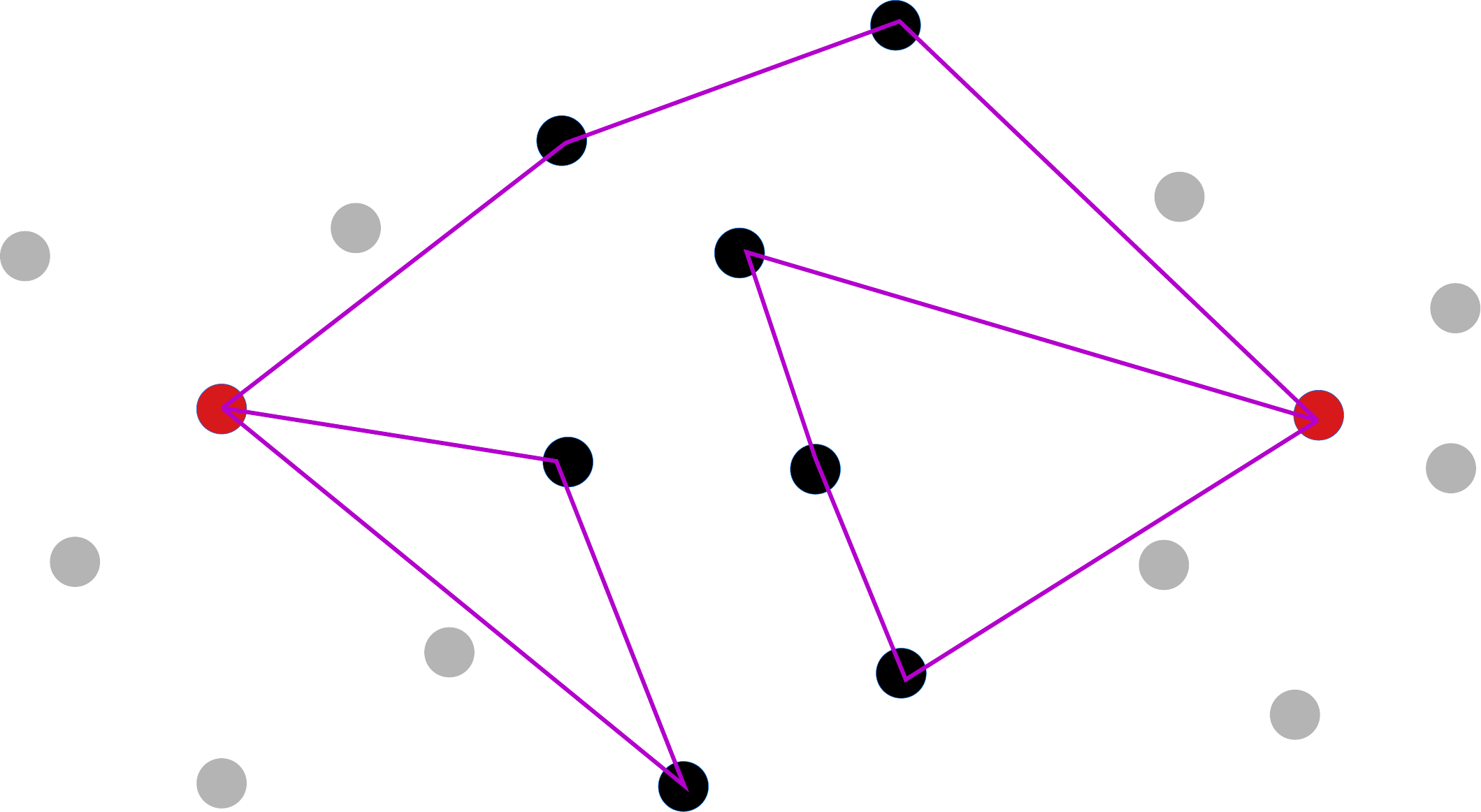}
    \caption{}
    \label{fig:2b}
  \end{subfigure}\hspace{5em}
  \begin{subfigure}[b]{0.2\textwidth}
    \includegraphics[width=\textwidth]{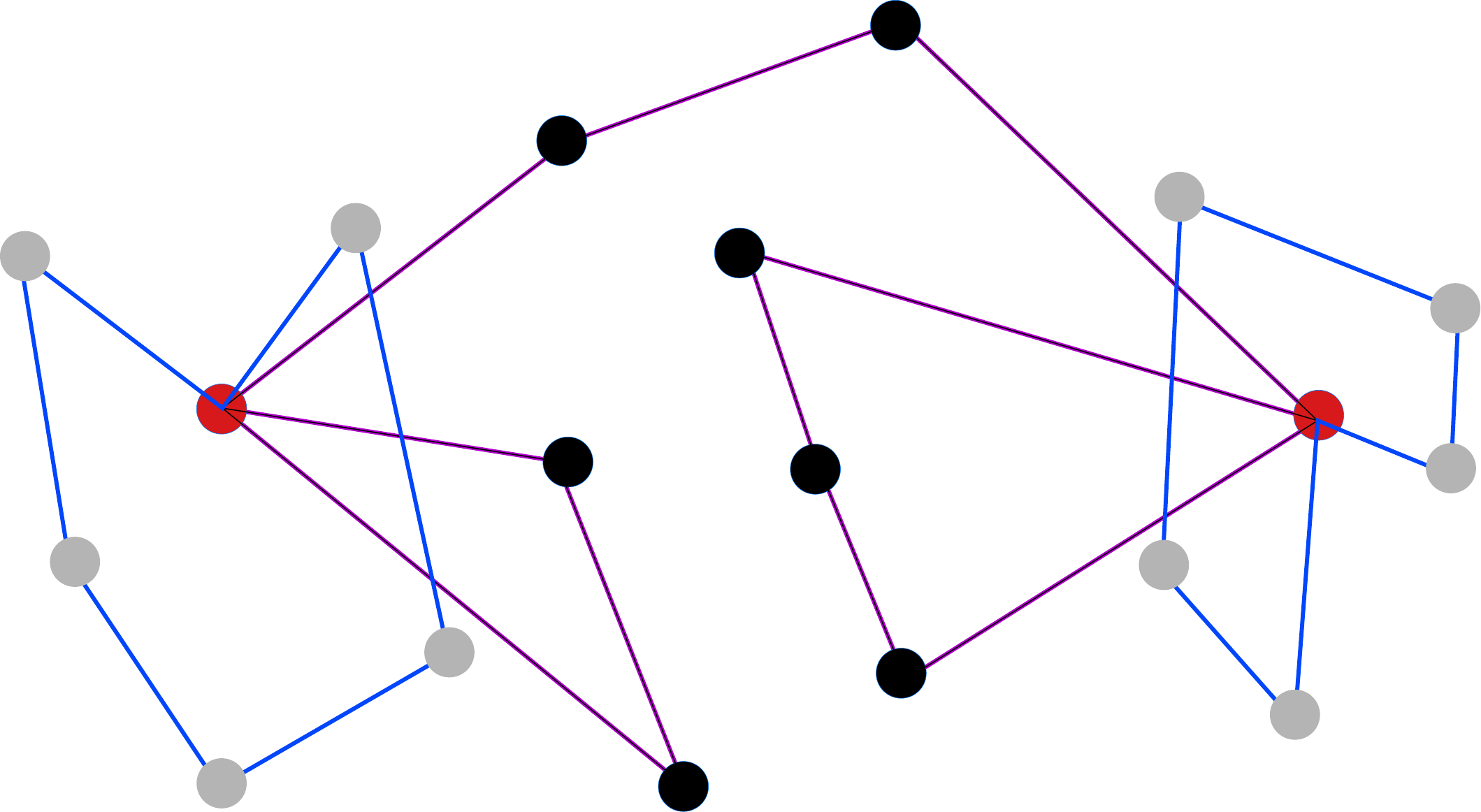}
    \caption{}
    \label{fig:2c}
  \end{subfigure}
  
  \caption{An example execution of Algorithm~\ref{alg:Heuristic} is shown. (a) displays the problem instance with two depots highlighted in red, and the vertices to be visited marked in green. The vertices are partitioned based on their distance from the depots in (b). Black vertices belong to $V_0$, while grey vertices belong to $V_1$. In this example, $b=1$ in Line~\ref{algln:b} of the algorithm. The spanning forest and subsequent tours for each $V_i$ are found, and tours for $k=1$ for $V_0$ and $k=2$ for $V_1$ are displayed in (c). The depots are then inserted into the tours of $V_1$ and $V_0$ to make them feasible in (d) and (e), respectively. The final route is presented in (e), where the cycles of one depot are traversed first before moving on to the next depot.}
  \label{fig:example}
\end{figure*}

\section{Heuristic Improvements}
In this section, we introduce heuristic improvements to the approximation algorithm presented in the previous section. While these improvements do not impact the algorithm's approximation ratio, they significantly enhance solution quality in practice. One of the advantages of presenting algorithms and their analysis as done in the previous section is that when we enhance a specific aspect of the overall algorithm, we can assess its effect on the entire algorithm. For instance, if an algorithm with an improved approximation ratio for the minimum segment cover supersedes Algorithm~\ref{alg:minrechargecover}, Theorems~\ref{thm:min_recharge} and~\ref{thm:main} enable us to utilize that algorithm for achieving better approximation ratios for Problems~\ref{pbm:min_recharges} and~\ref{pbm:main}.

The heuristic algorithm for solving Problems~\ref{pbm:main} and~\ref{pbm:min_recharges} is presented in Algorithm~\ref{alg:Heuristic}. When partitioning the vertices in $V$, instead of choosing each partition separately, we bundle $b$ partitions with vertices closer to each other in Line~\ref{algln:b}. We can go through all choices of $b$ between $1$ and $\lceil\log(D/2\delta)\rceil$ and pick the best solution to keep the approximation guarantee. In our experiments, we observed that larger values of $b$ tend to yield superior solutions.

In the previous section's approximation algorithm, Line~\ref{algln:paths} of Algorithm~\ref{alg:minrechargecover} is used to determine the minimum number of paths covering a partition, followed by connecting the endpoints of those paths to the nearest depots to create segments. The algorithm for finding the minimum number of paths, inspired by~\cite{arkin2006approximations}, finds a tour of different connected components (using Minimum Spanning Forest in Line~\ref{algln:mst}) and then splits each tour into appropriately sized paths.

In Algorithm~\ref{alg:Heuristic}, when a tour $T$ of a connected component is found in Line~\ref{algln:tour}, it diverges from the previous approach. Instead of breaking the tour into short segments, we utilize the $\textsc{InsertDepots}$ function to traverse the tour, inserting recharging depots wherever necessary to complete it. The tour is only split when it is impossible to traverse from one vertex to the next with only one recharge in between. The resulting segments are then traversed using an ordering given by the TSP of the first and last depots of each segment. A sample execution of the algorithm is shown in Figure~\ref{fig:example}.

\begin{algorithm}
\DontPrintSemicolon
\caption{\textsc{HeuristicAlgorithm}}
\label{alg:Heuristic}
\KwIn{Graph $G=(V\cup Q,E)$ with weights $w_{ij}, \forall\{i,j\}\in E$, discharging time $D$}
\KwOut{A walk $W$ covering all $v\in V$}
  \vspace{0.2em}
  \hrule
  $W\leftarrow \emptyset$ \;
  $S\leftarrow \emptyset$\;
  $b\leftarrow $ choose an integer between $1$ and $\lceil\log(D/2\delta)\rceil$\;
  \For {$j=0:b:\lceil\log(D/2\delta)\rceil$} 
    {
    $\mathcal{V}\leftarrow \{V_j,\ldots,V_{b-1}\}$\;\label{algln:b}
    \For {$k=1$ to $|\mathcal{V}|$}{
     $\mathcal{F}_k\leftarrow$ Minimum Spanning Forest of $\mathcal{V}$ with $k$ components \;\label{algln:mst}
     $R_k\leftarrow\emptyset$\;
     \ForEach{component $H$ in $\mathcal{F}_k$}{
     Find a tour $T$ of vertices in $H$\;\label{algln:tour}
     $\bar{T} \leftarrow$ \textsc{InsertDepots}$(T,G,D)$\;
     $R_k\leftarrow R_k\cup \bar{T}$\;
     }
     }
     Choose $R$ from $\{R_1, \ldots, R_{|\mathcal{V}|}\}$ with fewest recharges\;
     $S\leftarrow S\cup R$\;}
     $\texttt{TSP}\leftarrow$ TSP of endpoint depots in $S$\;
     $W\leftarrow$ Traverse segments in $S$ in $\texttt{TSP}$ order
\end{algorithm}

\section{Simulation Results}
\begin{figure}[h]
  \centering
  \subfloat[ILP solution with cost $491$ and $10$ recharges.]{\includegraphics[width=0.20\textwidth]{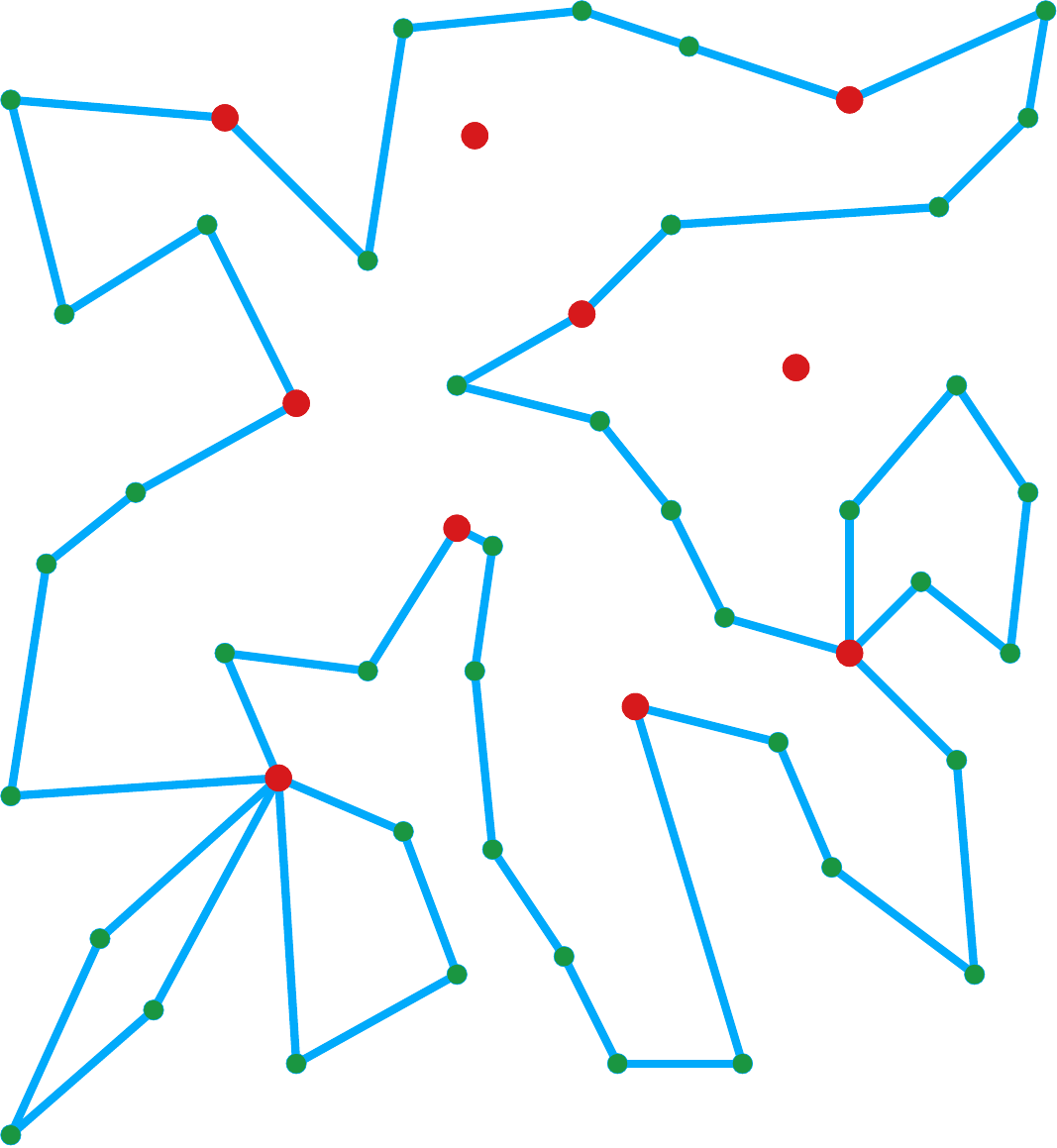}}
  \hfill
  \subfloat[Algorithm~\ref{alg:approx} solution with cost $609$ and $14$ recharges.]{\includegraphics[width=0.20\textwidth]{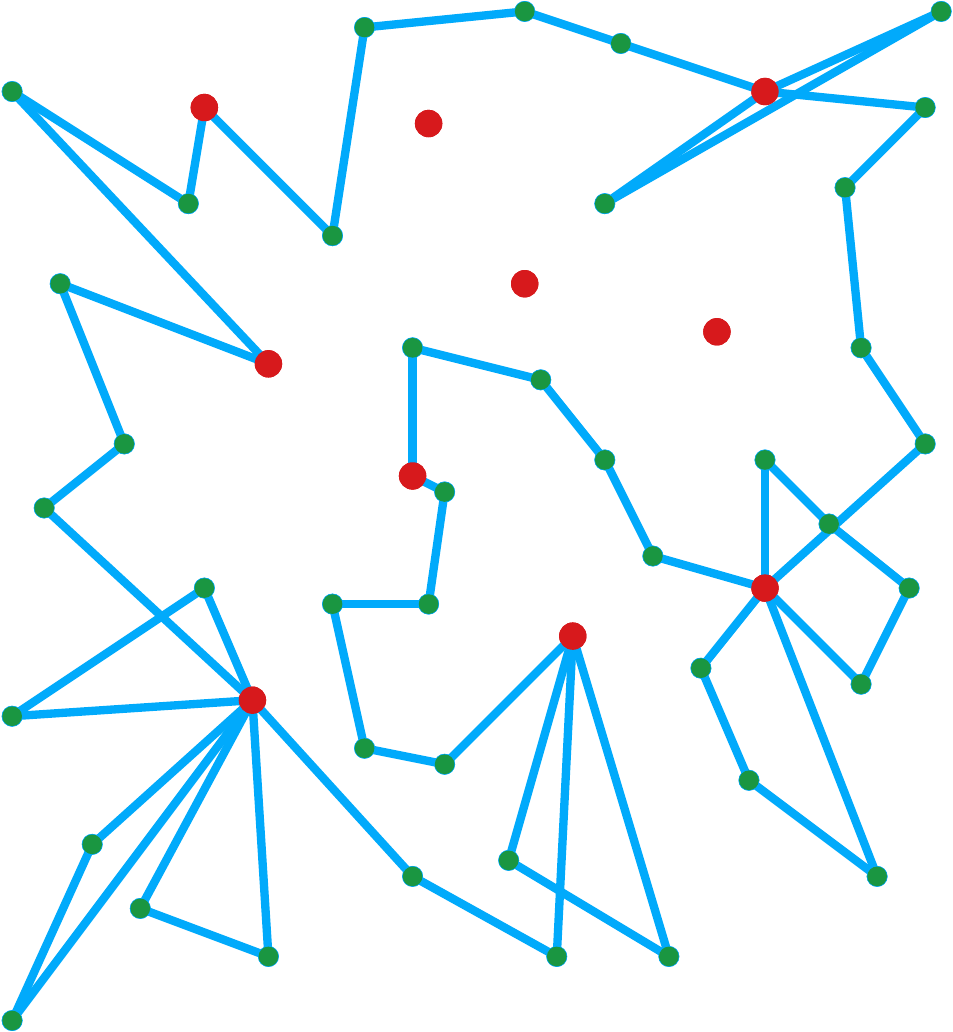}}
  \caption{Comparison of the two solutions obtained on the problem instance `eil51' with $10$ recharging locations (shown in red), $41$ vertices to be visited (shown in green), and $D=50$.}
  \label{fig:1}
\end{figure}
In this section, we compare the performance of the proposed algorithm with an Integer Linear Program (ILP). The ILP provides an optimal solution, allowing us to evaluate the cost of the solutions against the optimal cost.

To create problem instances for evaluation, we utilized the Capacitated Vehicle Routing Problem library from TSPLIB~\cite{reinelt1991tsplib}, a well-recognized library of benchmark instances for combinatorial optimization problems. It's worth noting that this library primarily includes instances with a single depot and therefore, we needed to select both the depot set $Q$ and the battery discharge time $D$ to construct a problem instance suitable for our specific problem. 

For each instance from TSPLIB with $n$ vertices (number of vertices is written with instance name in Table~\ref{tab:comparison}), we generate multiple instances by varying the number of depots, denoted as $m=|Q|$, selected from the $n$ locations, and by using different values of $D$. The run-time and the cost (length of the returned route) of the ILP and our algorithm are presented in Table~\ref{tab:comparison}.
For instances with more than $30$ vertices, the ILP did not terminate within a timeout of $5$ minutes, and we report the best solution found within that time in the table. The highlighted numbers indicate the cost of the optimal solution. Since, we are solving an NP-hard problem using ILP, as expected, the run times of the proposed algorithm are much faster than that of ILP for large instances. In fact for the instance with $262$ vertices, the ILP was not able to find any feasible solution within $5$ minutes. Despite low run times, the cost of the solutions of the proposed algorithm are within $31\%$ of that of ILP on average, and within $1.53$ factor of the ILP solution in the worst case. Figure~\ref{fig:1} shows the routes returned by both the algorithms on the Problem instance `eil51'. 

\begin{table}[t]

\caption{Comparison of ILP and Algorithm~\ref{alg:Heuristic} for TSPLIB Instances}
\centering
\setlength{\tabcolsep}{1pt}
\renewcommand{\arraystretch}{1.3}
\begin{tabular}{c c c c c c}
\hline
\multirow{2}{*}{Instance} & \multirow{2}{*}{\begin{tabular}[c]{@{}c@{}}$(m, D)$\end{tabular}} & \multicolumn{2}{c}{ILP} & \multicolumn{2}{c}{Algorithm~\ref{alg:Heuristic}} \\ 
 &  & Runtime (s) & Cost & Runtime (s) & Cost \\ \hline
 \multirow{2}{*}{eil23} & $(5, 200)$ & $0.7$ & $\mathbf{463}$ & $0.8$ & $509$ \\ 
 & $(3,300)$ & $0.7$ & $\mathbf{471}$ & $1.3$ & $498$ \\\hline
\multirow{2}{*}{eil30} & $(4, 150)$ & $3.2$ & $\mathbf{368}$ & $3.0$ & $406$ \\ 
 & $(8,80)$ & $18.5$ & $\mathbf{363}$ & $1.55$ & $479$ \\\hline
 \multirow{2}{*}{att48} & $(5, 8000)$ & $300$ & $34452$ & $13.7$ & $48718$ \\ 
 & $(7, 4000)$ & $300$ & $39748$ & $15.1$ & $56681$ \\\hline
 \multirow{2}{*}{eil51} & $(5, 100)$ & $300$ & $443$ & $20.8$ & $552$ \\ 
 & $(10, 50)$ & $300$ & $491$ & $15.2$ & $609$ \\\hline
 \multirow{2}{*}{eilB76} & $(10, 100)$ & $300$ & $534$ & $10.5$ & $764$ \\ 
 & $(15, 50)$ & $300$ & $557$ & $12.3$ & $774$ \\\hline
 \multirow{2}{*}{eilA101} & $(7, 200)$ & $300$ & $624$ & $8.21$ & $960$ \\ 
 & $(10, 100)$ & $300$ & $649$ & $10.8$ & $945$ \\\hline
 \multirow{2}{*}{eil262} & $(60, 150)$ & $-$ & $-$ & $112$ & $3130$ \\ 
 & $(40, 250)$ & $-$ & $-$ & $115$ & $3050$ \\\hline
\end{tabular}

\label{tab:comparison}
\end{table}

\section{Conclusion}
In this paper, we presented an approximation algorithm for solving the problem of finding a route for a battery-constrained robot in the presence of multiple depots. We also proposed heuristic improvements to the algorithm and tested the algorithm against an ILP on TSPLIB instances. Considering recharging locations with different recharge times or costs can be an interesting direction for future work. Another direction for future work is to consider multiple robots to minimize their total or maximum travel times.


\newpage
\bibliographystyle{IEEEtran}
\bibliography{refs}
\end{document}